\DeclareMathOperator*{\argmax}{\arg\!\max}
\newtheorem{lemma}{Lemma}
\newtheorem{remark}{Remark}
\newtheorem{corollary}{Corollary}
\title{\LARGE \bf
Transition-based versus State-based Reward Functions for MDPs \\ with Value-at-Risk*
}
\author{Shuai Ma$^{1}$ and Jia Yuan Yu$^{1}$
\thanks{*This work was supported in part by the scholarship from China Scholarship Council (CSC).}
\thanks{$^{1}$Shuai Ma and Jia Yuan Yu are with Concordia Institute of Information System Engineering, Faculty of Engineering and Computer Science,
        Concordia University, 1515 Ste. Catherine St. West, Montreal, Quebec, Canada.
        {\tt\small m\_shua@encs.concordia.ca} and {\tt\small jiayuan.yu@concordia.ca}}%
}
\begin{document}

\maketitle
\thispagestyle{empty}
\pagestyle{empty}

\begin{abstract}
In reinforcement learning, the reward function on current state and action is widely used. When the objective is about the expectation of the (discounted) total reward only, it works perfectly. However, if the objective involves the total reward distribution, the result will be wrong. 
\color{black}
This paper studies Value-at-Risk (VaR) problems in short- and long-horizon Markov decision processes (MDPs) with two reward functions, which share the same expectations. 
Firstly we show that with VaR objective, when the real reward function is transition-based (with respect to action and both current and next states), the simplified (state-based, with respect to action and current state only) reward function will change the VaR. Secondly, for long-horizon MDPs, we estimate the VaR function with the aid of spectral theory and the central limit theorem. Thirdly, since the estimation method is for a Markov reward process with the reward function on current state only, we present a transformation algorithm for the Markov reward process with the reward function on current and next states, in order to estimate the VaR function with an intact total reward distribution.  

\end{abstract}

\section{INTRODUCTION}

A Markov decision process (MDP) is a mathematical framework for formulating the discrete time stochastic control problems. This framework has two features, one is randomness, mainly reflected by transition probability, the other one is controllability, reflected by policy. These two features enable MDP as a natural tool in sequential decision-making for practical problems.

An optimal policy describes how to make decisions sequentially, in order to maximize (minimize) some objective concerning the total reward. The objective is a measure of the random reward sequence set for a given MDP.
\color{black}
The typical objective class concerns the expected total reward 
over a potentially infinite horizon \cite{Derman:1970:FSM:578852,Puterman1994a}.
\color{black}

However, the objectives involving expectation are not sufficient for many risk-averse problems, where the risk concerns arise not only mathematically but also psychologically. A classic example in psychology is the ``St. Petersburg Paradox,'' which refers to a lottery with an infinite expected reward, but people only prefer to pay a small amount to play. This problem is thoroughly studied in utility theory, and a recent study brought this idea to reinforcement learning \cite{Prashanth2015}.
A more mathematical example would be autonomous vehicles, in which a sufficient safety factor is more important than the optimal expectation. In general, when high reliability is concerned, the objective should be formulated as probability instead of expectation.

Two risk-sensitive objective classes have been widely examined in recent years. 
One is the coherent risk measure \cite{Artzner1998a}, which has a set of intuitively reasonable properties (convexity, for example). A thorough study in coherent risk optimization can be found in \cite{Ruszczynski2006a}. 
The other important class is the mean-variance measure \cite{D.J.White1988a,doi:10.1287/opre.42.1.175,Mannor2011a}, in which the expected return is maximized with a given risk level (variance). It is also known as modern portfolio theory.   

Value-at-Risk (VaR) originates from finance. For a given portfolio (an MDP with a policy), a loss threshold (target level), and a time-horizon, VaR concerns the probability that the loss on the portfolio exceeds the threshold over the time horizon. VaR is hard to deal with since it is not a coherent risk measure \cite{Riedel2004}.

In reinforcement learning, we encounter two types of reward functions. In many situations the real reward function is $r:S \times A \times S \rightarrow \mathbb{R}$ , and here we call it \textit{transition-based} reward function. When the real reward function is transition-based, it is usually simplified to $r':S \times A \rightarrow \mathbb{R}$, and here we call it \textit{state-based} reward function. State-based reward function is widely used, such as in Q-learning \cite{watkins1992q} and linear programming for constrained MDP \cite{altman1999constrained}. However, when the objective concerns the total reward distribution instead of the expectation only, the simplification of the reward function will change the optimal value. Imagine that a robot is learning to move around in a restricted area, and each move is attached with a reward, which should involve action, current and next states. If Q-learning is directly applied to this agent, and a reward function is learned with current state and action only, then the result with respect to a risk-sensitive objective, which considers the total reward distribution instead of the expectation only, will be wrong.


In this paper, we analyze the risk-sensitive MDPs and consider two VaR objectives for example, and propose a transformation for keeping the total reward distribution intact while simplifying the reward function. The analysis and the transformation method are illustrated in an inventory control problem, which is modelled as MDPs in both short and long horizons with finite state and action spaces. 
In Section 2, the notations of MDP and two VaR problems are introduced.
\color{black}
In Section 3, we use a short-horizon MDP to illustrate that with the expected total reward objective, MDPs with the two reward functions have the same optimal expectation/policy, but different total reward distributions, which result in different VaRs. We also compare the augmented-state 0-1 MDP method and the proposed method for VaR problems, and give the proof for the first method. 

Our main contributions are described and discussed in Section 4, which include the following:
\begin{itemize}

\item We propose a state-transition transformation algorithm for Markov reward processes derived from MDPs with transition-based reward functions, in order to estimate the total reward distribution. Since the CDF estimation method is for a Markov reward process with a reward function $r'_{\pi}:S \rightarrow \mathbb{R}$, the proposed algorithm can transform a Markov reward process with a reward function $r_{\pi}:S \times S \rightarrow \mathbb{R}$ for the CDF estimation method, and keep the distribution intact.
\item We illustrate that both VaR problems relate to the VaR function, which is the infimum of the total reward distribution set. We estimate the VaR function with the aid of spectral theory and the central limit theorem for long-horizon MDPs.
\end{itemize}
In short, when the objective is risk-sensitive, it often refers to the total reward distribution instead of the expectation only, and if the real reward function is transition-based, it should not be simplified. 
In order to keep the distribution intact while being able to implement some technique which requires state-based reward functions, the proposed transformation algorithm should be carried out first. 
\color{black}
For related studies which concerned VaR or other risk-sensitive problems, we believe that they should be revisited using our proposed transformation approach instead of simplifying the reward directly.

\section{PRELIMINARIES AND NOTATIONS}
A finite-horizon MDP,   
$$\langle N, S, A, r, p, \mu, v\rangle,$$
is observed at decision epochs $\{0, 1, \cdots, N\}$ and $N$ is a finite positive integer; 
$S$ is a finite state space, and $X_t$ denotes the state at epoch $t$;
$A_x$ is the legitimate action set for $x \in S$, $A = \bigcup_{x \in S}A_x$ is a finite action space, and  $K_t$ denotes the action at epoch $t$; 
$r: S \times A \times S \rightarrow \mathbb{R}$ is the bounded and measurable reward function, and $r(x, a, y)$ denotes the reward (or cost if negative), given $X_t = x$,  $X_{t+1}=y$, $x,y \in S$, and the action $K_t=a \in A_x$.  we say this reward function is \textit{transition-based};
$p(y \mid x, a) = \mathbb{P}(X_{t+1}=y \mid X_t = x, K_t = a)$ denotes the homogeneous transition probability; 
$\mu$ is the initial state distribution;
$v$ denotes the salvage function. 

The optimal policy $\pi^*$ is determined by the objective. A policy $\pi$ refers to a sequence of decision rules ($\pi_0, \pi_1, \cdots,\pi_{N-1}$). Different types of decision rule are used in different situations, and here we focus on deterministic Markovian decision rules, which chooses an action $a \in A_x$ given the current state $x \in S$.

In a finite-horizon MDP with an expectation objective \cite{Puterman1994a}, for all $x \in S$, $a \in A_x$, the transition-based reward function is usually simplified by  

$$
r'(x,a) = \sum_{y \in S}r(x,a,y)p(y \mid x, a) \eqno{(1)}
$$


Here we say the reward function $r'$ is \textit{state-based}. It is suitable to simplify the reward function when the objective concerns the total reward expectation only, but when the objective is risk-sensitive (with respect to the total reward distribution), it will miss the optimal value and policy. 

In this paper we take VaR as an example to show the effect of the reward simplification on risk-sensitive objectives. 
Two VaR problems described in \cite{Filar1995b} are considered as possible objectives. 
Since there exists an optimal deterministic policy for finite-horizon MDPs with VaR objectives \cite{Wu1999a}, we only consider the deterministic policy space $\Pi^N$. Given a policy $\pi \in \Pi^N$ and an initial distribution $\mu$, we have the total reward $\Phi^{\pi}_{\mu} = \sum_{t=0}^{N-1} r(X_t,\pi(X_t),X_{t+1}) + v(X_N)$, where $X_0 \sim \mu$. To simplify the notation we henceforth denote the total reward by $\Phi$. Denote $F^{\pi}_{\Phi}$ as the total reward CDF with the policy $\pi$. VaR addresses the following problems.  

\newtheorem{definition}{Problem}
\begin{definition}
	Given a quantile $\alpha \in [0,1]$, find the optimal threshold $\rho_{\alpha} =  \sup\{\tau \in \mathbb{R}\mid \mathbb{P}(\Phi > \tau) \ge \alpha, \pi \in \Pi^N\}=\sup\{\tau \in \mathbb{R}\mid F^{\pi}_{\Phi}(\tau) \le 1-\alpha, \pi \in \Pi^N\}$.
\end{definition}	

This problem refers to the quantile function, i.e., $F^{\pi -1}_{\Phi}$.

\begin{definition}
	Given a threshold $\tau \in \mathbb{R}$, find the optimal quantile $\eta_{\tau} = \sup\{\alpha \in [0,1]\mid F^{\pi}_{\Phi}(\tau) \le 1-\alpha, \pi \in \Pi^N\}$.
\end{definition}

This problem concerns $F^{\pi}_{\Phi}$. 
Both VaR problems relate to $\inf\{F^{\pi}_{\Phi} \mid \pi \in \Pi^N\}$, and here we call it \textit{VaR function}. 
As will be illustrated below, when the horizon is short (Section 3), any point along the VaR function is $(\tau, 1-\eta_{\tau})$, and when the horizon is long (Section 4), and every (estimated) $F^{\pi}_{\Phi}$ is strictly increasing, 
any point along the VaR function is $(\rho_{\alpha}, 1-\eta_{\tau})$ with $\tau = \rho_{\alpha}$ or $\alpha = 1-\eta_{\tau}$. 

The state-based reward function is commonly used in most MDP studies even considering risk (\cite{Filar1995b} for example). However, if the real reward function transition-based and the objective is risk-sensitive, the simplification with (1) will miss the optimality, i.e., neither the policy nor the VaR is optimal. 
Here we use a short-horizon inventory control problem to illustrate the effect of reward function on the optimality with two VaR objectives, and how the VaR function solves both VaR problems.
\color{black}

\section{SHORT-HORIZON MDP FOR INVENTORY PROBLEM}

The inventory control problem is a straightforward example for interpreting the effect of the two reward functions, since the reward (sales volume) is related to both current and next states (inventory level). In this section,
we illustrate that both VaR problems refer to the VaR function, and show how the reward simplification affects the VaR function. 
\color{black}
 

\subsection{MDP Description and Expected Total Reward} 
Section 3.2.1 in \cite{Puterman1994a} described the model formulation and some assumptions for a single-product stochastic inventory control problem. Define the warehouse capacity $M \in \mathbb{N^+}$, and the state space $S=\{0, \cdots, M\}$. Briefly, at time epoch $t \in \{0, \cdots, N\}$, define $X_t$ as the inventory level before the order, $K_t \in \{0, \cdots, M-X_t\}$ as the order quantity, $D_t$ as the demand with a time-homogeneous probability distribution $\mathbb{P}(D_t=i)$, where $i \in \{0, \cdots, M\}$, then we have $X_{t+1}=max\{X_t+K_t-D_t,0\}$. 

For $x \in S$, define $c(x)$ as the cost to order $x$ units, and a fixed cost $W \ge 0$ for placing orders, then we have the order cost $O(x) = (W+c(x)) \mathds{1}_{[x>0]}$.
$f(x)$ denotes the revenue when $x$ units of demand is fulfilled. Then we have the real reward function $r(X_t,K_t,X_{t+1}) = f(X_t+K_t-X_{t+1})-O(K_t)$, which is transition-based. Here we ignore the maintenance fee to simplify the problem. 

We set the parameters as follows. The time horizon $N=2$, the fixed order cost $W=4$, the variable order cost $c(x)=2x$, the salvage reward $v=X_N$, the warehouse capacity $M=2$, and the price $f(x)=8x$.  The probabilities of demands are $\mathbb{P}(D_t=0)=0.25$, $\mathbb{P}(D_t=1)=0.5$, $\mathbb{P}(D_t=2)=0.25$ respectively. Initial distribution $\mu(0) = 1$, i.e., $X_0=0$. Firstly we calculate the real reward function by $r(X_t,K_t,X_{t+1}) = f(X_t+K_t-X_{t+1})-O(K_t)$. Secondly, we calculate the simplified reward function $r'$ by (1), which is state-based. 

The two reward functions and the transition probabilities are illustrated in Fig.~\ref{MCDesc}. The bold parts are an example which illustrates the difference between the two reward functions. The label $2(0,0.5)$ below the transition from 0 to 1 means that the reward $r(0,2,1)=0$ and the transition probability is 0.5; the label $2(0)$ in the text box near state 0 means when $X_t = 0$ and $K_t=2$, the simplified reward $r'(0,2) = 0$.
Besides, we set the initial state distribution $\mu = [1,0,0]$, and the salvage reward $v(x)=x$, for all $x \in S$. Now we have two MDPs with different reward functions: $\langle N, S, A, r, p, \mu, v\rangle$ and $\langle N, S, A, r', p, \mu, v\rangle$. 

\begin{figure}[b] 
\includegraphics[scale=0.9]{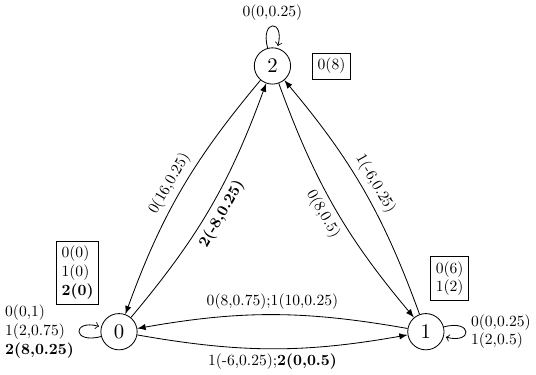} %
\caption{The two reward functions and the transition probabilities for the inventory problem. The labels $a(r(x,a,y),p(y \mid x,a))$ along transitions denote the real transition-based reward function and the transition probability, and the labels $a(r'(x,a))$ in the text boxes near states denote the state-based reward function simplified with (1).} 
\label{MCDesc}
\end{figure}



Without loss of generality, we consider the nominal expected total reward objective (or discounted / average), and solve the preceding MDPs. 
The optimal policy for both MDPs is $\pi^* = (\pi_0,\pi_1)$, where $\pi_0(0) = \pi_1(0) = 2$ and $\pi_1(x) = 0$, for $x \in \{1,2\}$. The expected total reward $\mathbb{E}(\Phi)=6.5625$.
\color{black}
As shown in Fig.~\ref{CDFdiff}, with the expected total reward objective, the simplification of transition-based reward function leads to a different total reward distribution, which leads to a different VaR function shown in Fig.~\ref{PFshort}. 
\begin{figure}[t]
\vskip 0.2in
\begin{center}
\centerline{\includegraphics[width=\columnwidth]{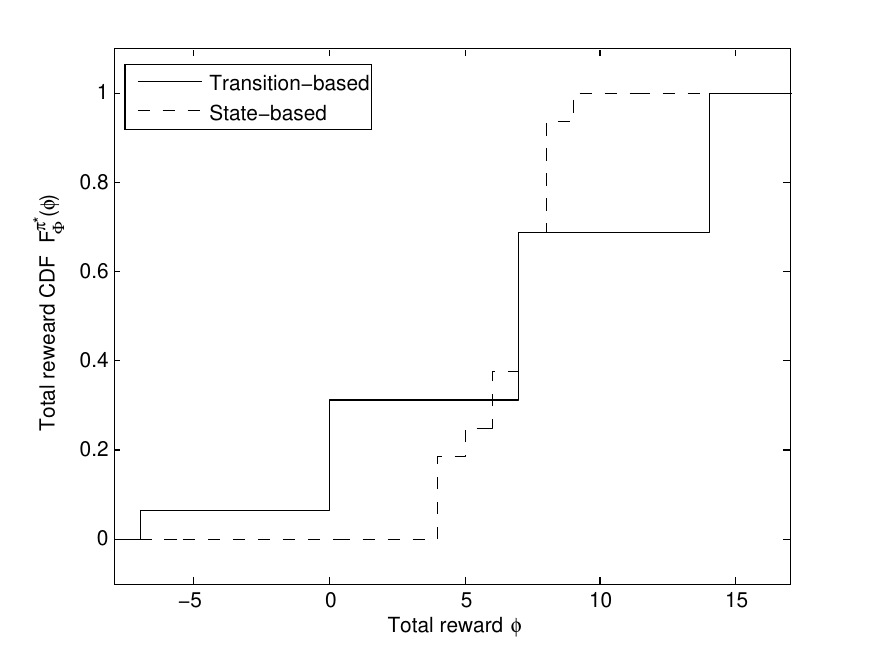}}
\caption{Taking the expected total reward as the objective, the two MDPs, $\langle N, S, A, r, p, \mu, v\rangle$ and $\langle N, S, A, r', p, \mu, v\rangle$, share the same optimal policy $\pi^*$, but the total reward CDFs are different.}
\label{CDFdiff}
\end{center}
\vskip -0.2in
\end{figure}

\subsection{VaR Problem}

Unlike the expected total reward, VaR are not time-consistent, so the backward induction cannot be implemented directly. For short-horizon MDPs, one method is the augmented-state 0-1 MDP \cite{Xu2011}, which incorporate the cumulative reward space in the state space, and brings in the threshold and reorganizes the MDP components, in order to calculate the quantile as expectation.
The other method is to enumerate all policies to achieve the total reward CDF set, then calculate the VaR function (the infimum of the total reward CDF set). Here we describe and compare the two methods.
\color{black}

\subsubsection{Augmented-State 0-1 MDP}

Since the cumulative reward information is needed for the optimality \cite{Bouakiz1995}, an augmented state space is adopted to keep track of it. 
For short-horizon MDPs with VaR Problem 2, Xu and Mannor \cite{Xu2011} presented a state augmentation method to include the cumulative reward in the state space. This state augmentation is also implemented in several former studies \cite{Bouakiz1995,Wu1999a,Ohtsubo2002,Xu2011}.


\begin{lemma}
	For every finite-horizon MDP $\langle N, S, A, r, p, \mu, v\rangle$, there exists an augmented-state 0-1 MDP $\langle N, S', A, v', p', \mu'\rangle$, in which the optimal expected total reward equals to the optimal quantile $\eta_{\tau}$  of the original MDP with a threshold $\tau \in \mathbb{R}$.
\end{lemma}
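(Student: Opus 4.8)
The plan is to establish the equality directly from the construction of the augmented-state $0$-$1$ MDP $\langle N, S', A, v', p', \mu_0'\rangle$ given above, by showing that for every policy the expected total reward of this new MDP equals the tail probability $\mathbb{P}(\Phi \ge \tau)$ of the original total reward, and then passing to the supremum over policies. The only structural fact needed is that the accumulated-reward coordinate of an augmented state is, along every sample path, an exact running record of the reward realized so far; once that is in place the argument is a short expectation computation.

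First I would check well-posedness of the construction. Because $S$ and $A$ are finite, $r$ takes finitely many values, hence only finitely many cumulative rewards are realizable in $N$ steps; taking $C$ to be that finite set makes $S' = S \times C$ a finite state space (the interval definition of $C$ in the excerpt is merely an over-approximation of it). For each $x,a,y$ with $p(y\mid x,a)>0$ and each $c_j \in C$ reachable at the current epoch, the successor $\big(y,\, c_j + r(x,a,y)\big)$ again lies in $C$, so $p'$ is a well-defined stochastic kernel on $S'$, and $v'$ --- written as $v'\big((x,c)\big) = \mathds{1}_{[\,c \ge \tau - v(x)\,]}$ --- is a well-defined terminal reward.

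Next I would set up the sample-path correspondence. Fix a deterministic Markov policy $\pi'$ on $S'$; since the $c$-coordinate is determined by the physical states visited, $\pi'$ acts as a deterministic policy on $S$ that is allowed to depend on the accumulated reward, and conversely any policy admissible for VaR Problem~2 --- which in general must depend on accumulated reward --- lifts to such a $\pi'$. Under a fixed such pair, the augmented chain $\big((X_0,c_0),\dots,(X_N,c_N)\big)$ has $c_0 = 0$ and $c_{t+1} = c_t + r(X_t,a_t,X_{t+1})$, so $c_N = \sum_{t=0}^{N-1} r(X_t,a_t,X_{t+1})$ while the $X$-marginal has the law of the original chain. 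Since every stage reward in the new MDP is $0$, its total reward $\Phi'$ equals the terminal salvage, and therefore
\begin{equation*}
\mathbb{E}\big(\Phi'\big) = \mathbb{E}\big(v'(X_N,c_N)\big) = \mathbb{P}\big(c_N \ge \tau - v(X_N)\big) = \mathbb{P}\big(\Phi \ge \tau\big) = 1 - F^{\pi}_{\Phi}(\tau^-).
\end{equation*}
Taking the supremum over policies on both sides and comparing with $\eta_\tau = \sup_{\pi}\{\,1 - F^{\pi}_{\Phi}(\tau)\,\}$ from VaR Problem~2 gives $\sup_{\pi'} \mathbb{E}(\Phi') = \eta_\tau$; this optimal value is then computed by backward induction on $\langle N, S', A, v', p', \mu_0'\rangle$, which is legitimate because its objective is an ordinary expected total reward.

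The hard part will be the bookkeeping around the policy class and the boundary atom at $\tau$. One has to state explicitly that ``the optimal VaR of the original MDP'' means the supremum over policies adapted to the accumulated reward --- a plain Markov policy on $S$ being in general suboptimal for VaR --- so that the augmented MDP's policy class is matched exactly rather than being strictly richer; and one must reconcile the weak inequality $\Phi \ge \tau$ baked into $v'$ with the strict inequality $\Phi > \tau$ in Problem~2. In the present finite-support setting the latter is harmless: it is absorbed either by shifting $\tau$ by an amount smaller than the gap between consecutive realizable reward values, or by taking the salvage to be $\mathds{1}_{[\,\Phi > \tau - v\,]}$ so that the two objectives coincide exactly. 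Everything else is routine.
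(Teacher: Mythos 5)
Your proposal is correct, and it rests on the same core identity as the paper's proof: with all stage rewards set to zero and terminal reward $v' = \mathds{1}_{[\Phi \ge \tau - v]}$, the expected total reward of the augmented MDP is exactly $\mathbb{P}(\Phi \ge \tau)$. The difference is in how you reach it and how much you justify along the way. The paper runs the backward-induction recursion and simply asserts that $u_t(x') = P(\Phi \ge \tau \mid X'_t = x')$ because the only nonzero reward is the terminal indicator; you instead compute the expectation directly on sample paths, using the fact that the $c$-coordinate is an exact running record of accumulated reward, and only then invoke backward induction as a computational device. More importantly, you address three points the paper's proof passes over in silence: (i) well-posedness, i.e., that $C$ can be taken finite so $S'$ is a legitimate finite state space and $p'$ is a genuine stochastic kernel; (ii) the policy-class correspondence --- that deterministic Markov policies on $S'$ are exactly the accumulated-reward-dependent policies needed for VaR optimality in the original MDP, so the two suprema range over matched sets rather than one being strictly richer; and (iii) the discrepancy between the weak inequality $\Phi \ge \tau$ in $v'$ and the strict inequality $\Phi > \tau$ appearing in the problem definitions, which you correctly note is harmless only because the reward distribution has finite support (and which otherwise would require shifting $\tau$ or redefining the indicator). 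These are genuine gaps in the paper's argument that your version closes; what the paper's backward-induction phrasing buys in exchange is that it doubles as the algorithm for actually computing $\eta_\tau$, which is how the lemma is used in Section~3.
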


\begin{proof}
Given an MDP $\langle N, S, A, r, p, \mu, v\rangle$, 
define $C$ as the cumulative reward space, $m_a$ and $M_a$ as the minimum and maximum of the rewards for action $a$. Then $C$ can be $\{0\}\bigcup_{t=1}^N \bigcup_{a \in A}[t \cdot m_a, t \cdot M_a]$, or we can acquire $C$ by enumerating all possibile cumulative reward within the short horizon. Define the augmented state space $S' = S \times C$ for the new MDP.
For all $x,y \in S$ and $x',y' \in S'$, define the action space $A'_{x'} = A_x$ where $x' = (x,\cdot)$; define the transition kernel $p'(y'\mid x',a) = p(y\mid x,a)$ where $y'=(y,c_i), x'=(x,c_j)$, $c_i-c_j = r(x,a,y)$, $c_i,c_j \in C$ and $a \in A_x$; define the initial distribution $\mu'((x,0)) = \mu(x)$. Set the salvage reward $v' = \mathds{1}_{[\Phi \ge \tau-v]}$, where $\tau$ is the threshold and $\Phi$ is the cumulative reward at the final epoch.
Now we have an augmented-state 0-1 MDP $\langle N, S', A, v', p', \mu'\rangle$.

	Given an augmented-state 0-1 MDP $\langle N, S', A, v', p', \mu'\rangle$, for all $x' \in S'$, implement the backward induction as follows.
	
	\textbf{Step 1}: Set $t = N$ and 
	$$u^*_N(x') = r'_N(x') = \mathds{1}_{[\Phi \ge \tau-v]}.$$
	
	\textbf{Step 2}: Set $t = t-1$, and compute $u^*_t(x')$ by
	$$u^*_t(x') = \max_{a \in A_{x'}}\{r'(x',a) + \sum_{y' \in S'}p'(y'\mid x',a)u^*_{t+1}(y')\},$$
	where  $r'(x',a) = 0$, therefore,
	$$u^*_t(x') = \max_{a \in A_{x'}}\{\sum_{y' \in S'}p'(y'\mid x',a)u^*_{t+1}(y')\}.$$
	Since the only rewards are $r'_N = \mathds{1}_{[\Phi \ge \tau-v]}$, we have $u_t(x') = P(\Phi \ge \tau\mid X'_t = x')$, i.e., the probability that the total reward $\Phi \ge \tau$ given any state at any epoch.
	
	\textbf{Step 3}: If $t=1$, stop. Otherwise return to Step 2.
The optimal policy derived from $$A^*_{x',t} = \argmax_{a \in A_{x'}}\{P(\Phi \ge \tau\mid X'_t = x')\}$$
	gives the highest probability to reach the threshold.   
\end{proof}

This 0-1 MDP enables backward induction to solve VaR Problem 2 with calculating the quantile $\eta_{\tau}$ as the expectation. Filar et al. \cite{Filar1995b} used the same ``0-1'' method for infinite-horizon MDPs with both VaR objectives. 
The augmented-state 0-1 MDP algorithm is presented in \cite{Xu2011}. 
In the implementation of the algorithm, it is worth noting that, in most instances, it is more efficient to deal with the state space in a time-dependent way, since at each epoch, only a subspace of $S'$ is feasible. Furthermore, since the reward function is recorded by means of the cumulative reward, the reward simplification still affects the result.


Now we use the augmented-state 0-1 MDP method to solve the inventory control problem described in Section 3.1. We consider the VaR Problem 2 with the threshold $\tau=9$, for instance. The optimal policy for the MDP with the transition-based reward function is $\pi^* = (\pi_0,\pi_1)$, where $\pi_0((0,0)) = 2$, $\pi_1((0,2)) = 2$, $\pi_1((0,8)) = $1 or 2, $\pi_1((1,0)) = 1 \text{ or } 2$, $\pi_1((1,6)) = 0 \text{ or } 1$, and $\pi_1((2,-2)) = 0 \text{ or } 1$. The optimal quantile is $\eta^*_{\tau} = 0.3125$. The optimal policy for the MDP with a simplified reward function is $\pi^* = (\pi'_0,\pi'_1)$, where $\pi'_0((0,0)) = 2$, $\pi'_1((2,0)) = 0$. And the optimal quantile is $\eta^*_{\tau} = 0.1875$. The conclusion drawn in Section III.A, which claims that the reward simplification changes the VaR, is verified here.

As described above, it is clear that the augmented-state 0-1 MDP method is for VaR Problem 2 with a specified threshold only. In order to achieve either VaR problem with any $\tau \in \mathbb{R}$ or $\alpha \in [0,1]$, we enumerate all the deterministic policies on the augmented state space $S'$ to acquire VaR. 

\subsubsection{Policy Enumeration}

In a finite-horizon MDP, the VaR function is a right-continuous step function. It is worth noting that its jump points contains the VaR information. Denote the jump set $\{x_i\}$, i.e., for every $x_i$ in this set, we have $\eta_{x_i^-} < \eta_{x_i}$, and for every $\tau \in \mathbb{R}$, such that $\eta_{\tau^-} < \eta_{\tau}$, we have $\tau \in \{x_i\}$.

\begin{remark}[VaR function in a finite horizon]
Given the jump set $\{x_i\}$ for a finite-horizon MDP, for any $\tau \in \mathbb{R}$, denote $\alpha^* = 1-\eta_{\tau}$, and $\tau^* = \rho_{(1-\alpha^*)}$, then we have 
$(\tau^*, 1-\alpha^*) \in \{(x_i^-, \eta_{x_i^-})\} \cup \{(+\inf, 1)\}$. There exists a similar conclusion for any $\alpha \in [0,1]$. 

\end{remark}
In other words, for a finite-horizon MDP, the set $\{(x_i^-, \eta_{x_i^-})\} \cup \{(+\inf, 1)\}$ contains all solutions to the two VaR problems defined in Section II.A.
\color{black}

\begin{figure}[b]
\vskip 0.2in
\begin{center}
\centerline{\includegraphics[width=\columnwidth]{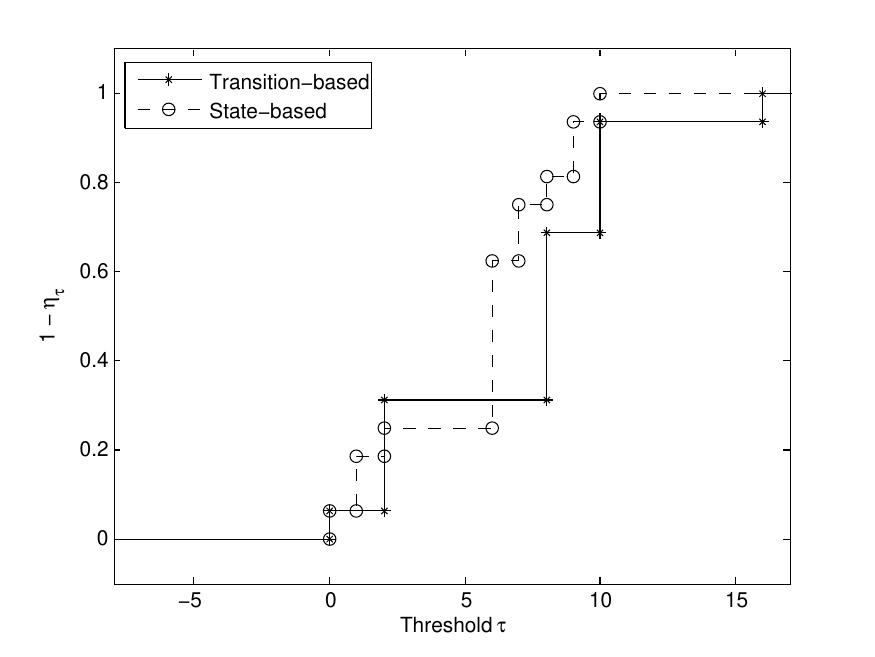}}
\caption{The VaR functions for the short-horizon inventory problems with two reward functions.} 
\label{PFshort}
\end{center}
\vskip -0.2in
\end{figure}

In the same MDP setup described in Section III.A, we study the effect of the reward functions on the VaR function. Given the two MDPs, $\langle N, S, A, r, p, \mu, v\rangle$ and $\langle N, S, A, r', p, \mu, v\rangle$, Fig.~\ref{PFshort} illustrates that the simplification of reward function changes the VaR function. 
Now we can verify the solution to the VaR problem with a specified threshold $\tau=9$. Furthermore, for any threshold $\tau \in \mathbb{R}$, we can acquire $\eta_{\tau}$ along the curves, so as for any quantile. 
For example, when $\tau = 7.5$, $\eta_{\tau}$ for the MDP with a transition-based reward function is 0.6875 ($1-0.3125$), and $\eta'_{\tau}$ for the MDP with a state-based reward function is 0.25 ($1-0.75$). The reward simplification results in a nontrivial difference ($0.6875-0.25=0.4375$), which may have an effect on a risk-sensitive decision making.
\color{black}

In conclusion, the reward simplification changes the VaR. For the VaR Problem 2, the augmented-state 0-1 MDP method enables the backward induction algorithm. However, this method fails for long-horizon MDPs,
and it works for the VaR problem 2 with a specified threshold only. 
Since both VaR problems relate to the VaR function, how to achieve it effectively in a short horizon needs further study.
\color{black}

\section{VAR PROBLEMS IN LONG-HORIZON MDPS}
Since it is intractable to find the exact optimal policy for a long-horizon MDP with a VaR objective, we look for a deterministic stationary policy instead, i.e., $\pi^* \in \Pi$. With the aid of spectral theory and the central limit theorem, the total reward CDF set $\{F^{\pi}_{\Phi}\}$ can be estimated for the MDP with a state-based reward function by enumerating all the deterministic policies, in order to achieve the VaR function.
For MDPs with transition-based reward functions, we present an algorithm to transform a Markov reward process with the reward function $r_{\pi}:S \times S \rightarrow \mathbb{R}$ to one with $r^{\dagger}_{\pi}:S^{\dagger} \rightarrow \mathbb{R}$, where $S^{\dagger} = S \times S$, in order to keep the VaR function (also all the total reward distributions) intact. With a slight abuse of notation, we say the reward function $r_{\pi}$ ($r^{\dagger}_{\pi}$) is transition-based (state-based).
\color{black}
\subsection{Total Reward CDF Estimation}
Firstly we estimate the total reward distribution for a long-horizon Markov reward process derived from an MDP with a state-based reward function. Given an MDP with a state-based reward function $\langle N, S, A, r', p, \mu\rangle$\footnote[5]{Though the salvage function $v$ is ignored when the horizon is long, it can be involved if necessary.} and a deterministic policy $\pi$, we have a Markov reward process $\langle N, S, r'_{\pi}, p_{\pi}, \mu\rangle$. 
For $x,y \in S$, the reward is $r'_{\pi}(x) = r'(x,\pi(x))$, and the transition kernel is $p_{\pi}(x,y)=p(x,\pi(x),y)$.

Kontoyiannis and Meyn \cite{kontoyiannis2003} proposed a method to estimate $F^{\pi}_{\Phi}$. In a positive recurrent Markov reward process with invariant probability measure (stationary distribution) $\xi$, we have the total reward $\Phi_N = \sum_{t=0}^{N-1} r'_{\pi}(X_t)$, and the averaged reward $\zeta(r'_{\pi}) = \lim_{N\rightarrow\infty} \frac{1}{N} \mathbb{E}(\Phi_N)$, which can be expressed as $\zeta = \xi r'_{\pi}$. 
Define the limit $\hat{r'_{\pi}} = \lim_{N \rightarrow \infty} \mathbb{E}_{\mu} (\Phi_N - N\zeta)$, which solves the Poisson equation 
$$P\hat{r'_{\pi}} = \hat{r'_{\pi}} - r'_{\pi} + \zeta,$$
where $P$ is the transition matrix and $P(x,y) = p_{\pi}(x,y)$. Two assumptions (\cite{kontoyiannis2003}, Section 4) are needed for the total reward CDF estimation.
\newtheorem{assumption}{Assumption}
\begin{assumption}
	The Markov reward process $X$ is geometrically ergodic with a Lyapunov function $V:X \rightarrow [1,\infty)$ such that $\zeta(V^2)<\infty$.	
\end{assumption}

\begin{assumption} 
	The (measurable) function $r'_{\pi}:S \rightarrow [-1,1]$ has zero mean and nontrivial asymptotic variance $\sigma^2 = \lim_{N \rightarrow \infty} var_x[(\Phi_N)/\sqrt{N}]$.	
\end{assumption}

Under the two assumptions, we show the Edgeworth expansion theorem for nonlattice functionals (Theorem 5.1 in \cite{kontoyiannis2003}) as follows.
\newtheorem{theorem}{Theorem}
\begin{theorem}
	Suppose that $X$ and the strongly nonlattice functional $r'_{\pi}$ satisfy Assumptions 1 and 2, and let $G_N(y)$ denote the distribution function of the normalized partial sums $(\Phi_N - N\zeta(r'_{\pi}))/\sigma\sqrt{N}$:
	$$G_N(y) = \mathbb{P}\{(\Phi_N - N\zeta(r'_{\pi}))/\sigma\sqrt{N} \le y\}, \text{ for all } y \in \mathbb{R}.$$	
	Then, for all $x_0 \in S$ and as $N \rightarrow \infty$,
	$$ G_N(y) = g(y) + \frac{\gamma(y)}{\sigma\sqrt{N}}[\frac{\kappa}{6\sigma^2}(1-y^2)-\hat{r'_{\pi}}(x_0)]+o(N^{-0.5}) \eqno{(2)},$$
	
	where $\gamma(y)$ denotes the standard normal density, $g(y)$ is the corresponding distribution function, and $\kappa$ is a constant related to the third moment of $\Phi_N/\sqrt{N}$. The formulae for $\kappa$, $\hat{r'_{\pi}}$ and $\sigma^2$ can be found in \cite{Xu2011}.
\end{theorem}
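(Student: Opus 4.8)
The plan is to establish the expansion by the spectral (Nagaev--Guivarc'h) method: encode the moment generating function of $\Phi_N$ through a family of ``twisted'' transition operators, isolate the dominant eigenvalue and its spectral projection, expand both analytically near the origin, substitute the imaginary argument that produces the normalized characteristic function, and invert the Fourier transform via the Esseen smoothing inequality. For a complex parameter $t$ near $0$, define on the weighted supremum-norm space $L^\infty_V$ (with $V$ the Lyapunov function of Assumption 1) the kernel $\hat P_t f(x) = e^{t\,r'_{\pi}(x)}(Pf)(x)$, so that $\hat P_0 = P$ and, iterating, $(\hat P_t^{\,N}\mathbf{1})(x_0) = \mathbb{E}_{x_0}\big[e^{t\Phi_N}\big]$ with $\mathbf{1}$ the constant function.

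First I would invoke geometric ergodicity (Assumption 1), which endows $P$ with a spectral gap on $L^\infty_V$. Analytic perturbation theory then supplies, for $t$ in a neighborhood of $0$, a simple isolated eigenvalue $\lambda(t)$ with one-dimensional spectral projection $\Pi_t$, both analytic in $t$, together with a splitting $\hat P_t = \lambda(t)\,\Pi_t + N_t$ where $\Pi_t N_t = N_t\Pi_t = 0$ and the spectral radius of $N_t$ is below $\rho\,|\lambda(t)|$ for some fixed $\rho<1$. Iterating gives $\mathbb{E}_{x_0}\big[e^{t\Phi_N}\big] = \lambda(t)^{N}\big[(\Pi_t\mathbf{1})(x_0) + O(\rho^{N})\big]$, uniformly for $t$ near $0$. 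Writing $\Lambda(t)=\log\lambda(t)$, Assumption 2 forces $\Lambda(0)=0$, $\Lambda'(0)=\zeta(r'_{\pi})=0$, $\Lambda''(0)=\sigma^2$, and $\Lambda'''(0)=\kappa$; differentiating $\hat P_t h_t=\lambda(t)h_t$ at $t=0$ shows that the derivative of the right eigenfunction solves the Poisson equation $P\hat{r'_{\pi}} = \hat{r'_{\pi}} - r'_{\pi} + \zeta$, and a short computation then gives that $(\Pi_t\mathbf{1})(x_0)$ is analytic, equals $1$ at $t=0$, and has logarithmic derivative $\hat{r'_{\pi}}(x_0)$ there.

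Next I would set $t = iu/(\sigma\sqrt{N})$ and Taylor-expand $N\Lambda(t)$ and $\log(\Pi_t\mathbf{1})(x_0)$ to the order that survives the $N^{-1/2}$ scaling, obtaining, for $|u|\le\delta\sqrt{N}$,
\begin{equation*}
\mathbb{E}_{x_0}\Big[e^{\,iu(\Phi_N - N\zeta)/\sigma\sqrt{N}}\Big] = e^{-u^2/2}\Big(1 + \tfrac{1}{\sigma\sqrt{N}}\big[\tfrac{\kappa}{6\sigma^2}(iu)^3 + \hat{r'_{\pi}}(x_0)\,(iu)\big]\Big) + o\big(N^{-1/2}\big).
\end{equation*}
This is fed into the Esseen smoothing inequality, which bounds the uniform distance between $G_N$ and the asserted expansion by the integral over $|u|\le T_N$ of the modulus of the difference of the two characteristic functions against $du/|u|$, plus a term of order $T_N^{-1}$; with $T_N$ of order $\sqrt{N}$, the pointwise estimate above controls $|u|\le\delta\sqrt{N}$, while on $\delta\sqrt{N}\le|u|\le T_N$ one uses the geometric decay of $\|\hat P_{iu}^{\,N}\|$. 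Finally, inverting $(iu)e^{-u^2/2}$ and $(iu)^3 e^{-u^2/2}$ and integrating from $-\infty$ to $y$ turns the linear term into $-\hat{r'_{\pi}}(x_0)\,\gamma(y)$ (because $\int_{-\infty}^{y}s\,\gamma(s)\,ds = -\gamma(y)$) and the cubic term into $\tfrac{\kappa}{6\sigma^2}(1-y^2)\gamma(y)$, which recombine into the stated formula with $g$ the standard normal distribution function and $\gamma$ its density.

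The hard part will be the uniform spectral control away from the origin: one must show that on every compact band $\delta\le|u|\le T$ the operator $\hat P_{iu}$ has spectral radius strictly below $1$ --- this is exactly where the \emph{strongly} nonlattice hypothesis on $r'_{\pi}$ enters, since it precludes $e^{iu r'_{\pi}}$ from being cohomologous to a constant and hence rules out peripheral eigenvalues --- and then to upgrade this to a bound $\|\hat P_{iu}^{\,N}\| \le C\rho^{N}$ that stays integrable against $du/|u|$ as the cutoff $T_N$ grows with $N$; the ``strong'' form of the hypothesis is also what yields $o(N^{-1/2})$ rather than merely $O(N^{-1/2})$. The analytic perturbation near $0$ and the Fourier inversion are by comparison routine; the one point demanding care is to carry the point-mass initial condition $x_0$ (in place of the stationary law) through every step, which is precisely what produces the $\hat{r'_{\pi}}(x_0)$ correction.
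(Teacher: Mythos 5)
The paper does not actually prove this theorem: it is quoted verbatim (as Theorem 5.1 of Kontoyiannis and Meyn, 2003) and used as a black box, so there is no in-paper proof to compare against. Your sketch correctly reconstructs the argument of the cited source, which is indeed the Nagaev--Guivarc'h spectral method: twisted kernel $\hat P_t$ on the $V$-weighted space, analytic perturbation of the isolated Perron eigenvalue giving $\mathbb{E}_{x_0}[e^{t\Phi_N}]=\lambda(t)^N(\Pi_t\mathbf{1})(x_0)(1+O(\rho^N))$, identification of $\Lambda''(0)=\sigma^2$, $\Lambda'''(0)=\kappa$, and of the logarithmic derivative of $(\Pi_t\mathbf{1})(x_0)$ with the Poisson-equation solution $\hat{r'_{\pi}}(x_0)$ (consistent with $\hat{r'_{\pi}}=\lim_N\mathbb{E}_{x_0}(\Phi_N-N\zeta)$), followed by Esseen smoothing and Fourier inversion of $(iu)e^{-u^2/2}$ and $(iu)^3e^{-u^2/2}$ into $-\gamma(y)$ and $(1-y^2)\gamma(y)$; I checked the signs and coefficients and they match the stated expansion. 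You also correctly locate where the strongly nonlattice hypothesis is indispensable (uniform spectral-radius bound for $\hat P_{iu}$ away from $u=0$, and the upgrade from $O(N^{-1/2})$ to $o(N^{-1/2})$); this is a sketch rather than a complete proof, since those uniform estimates are only described, not carried out. One remark specific to this paper's setting: since $S$ is finite, $\hat P_{iu}$ is a finite matrix, so the weighted-norm and Lyapunov-function machinery can be replaced by elementary Perron--Frobenius perturbation theory, and Assumption 1 reduces to irreducibility and aperiodicity of the chain under $\pi$.
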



\subsection{State-Transition Transformation}

For a Markov reward process derived from an MDP with a transition-based reward function and a stationary policy $\pi$, we cannot apply the method directly since the reward function is $r_{\pi}:S \times S \rightarrow \mathbb{R}$. If the reward function is simplified with (1), the VaR will be affected as illustrated in Section 3. In order to implement the estimation and achieve the real distribution, we propose a method to transform the Markov reward process with a transition-based reward function to one with a state-based reward function. 

\begin{figure}[b]
\vskip 0.2in
\begin{center}
{\includegraphics[width=\columnwidth]{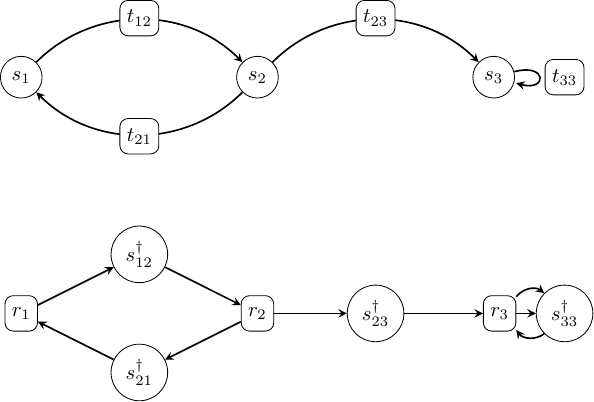}}
\caption{Roles of states and transitions in the transformed Markov reward process.}
\label{MC_Trans}
\end{center}
\vskip -0.2in
\end{figure}

Fig.~\ref{MC_Trans} illustrates what roles the states and transitions play in the original Markov reward process (upper) and its transformed counterpart (lower). In the original Markov reward process, $s_i$ denotes a state and $t_{ij}$ denotes a transition from $s_i$ to $s_j$. In a transformed Markov reward process, the original state $s_i$ becomes a ``router'' $r_i$, which connects input nodes (transformed states) $s^{\dagger}_{ji}$ to output nodes $s^{\dagger}_{ik}$. 

\begin{theorem}
	For a Markov reward process $\langle N, S, r_{\pi}, p_{\pi}, \mu, v\rangle$ with $r_{\pi}$ transition-based, there exists a Markov reward process $\langle N^{\dagger}, S^{\dagger}, r^{\dagger}_{\pi}, p^{\dagger}_{\pi}, \mu^{\dagger}_{\pi}, v^{\dagger}_{\pi}\rangle$ with $r^{\dagger}_{\pi}$ state-based, such that both Markov reward processes have the same total reward distribution.
\end{theorem}

\begin{proof}
	Define a time horizon $N^{\dagger} = N-1$, a state space $S^{\dagger} = S \times S$. For all $x^{\dagger} = (x,y) \in S^{\dagger}$, define a state-based reward function $r^{\dagger}_{\pi}(x^{\dagger}) = r_{\pi}(x,y)$, a salvage function $v_{\pi}^{\dagger}(x^{\dagger}) = v(y)+r_{\pi}(x,y)$, an initial state distribution $\mu_{\pi}^{\dagger}(x^{\dagger})=\mu(x)p_{\pi}(y \mid x)$, a transition kernel $p^{\dagger}_{\pi}(x^{\dagger} \mid y^{\dagger}) = p_{\pi}(y \mid x)$, where $y^{\dagger}=(\cdot,x) \in S^{\dagger}$, otherwise the probability is 0. 
	
	In order to prove that the two Markov reward processes 
	shares the same total reward distribution, consider an arbitrary sample path $(x_0, x_1, \cdots, x_N) \in S^{N+1}$, which corresponds to the sample path $((x_0, x_1), (x_1, x_2), \cdots, (x_{N-1}, x_N)) \in S^{\dagger N}$ in the Markov reward process with a state-based reward function. The probability of the sample path is $\mu(x_0) \times p(x_1 \mid x_0) \times p(x_2 \mid x_1) \times \cdots \times p(x_N \mid x_{N-1}) = \mu_{\pi}^{\dagger}((x_0, x_1)) \times p^{\dagger}_{\pi}((x_1, x_2) \mid (x_0, x_1)) \times \cdots \times p^{\dagger}_{\pi}((x_{N-1}, x_N) \mid (x_{N-2}, x_{N-1}))$, and the corresponding total reward is $r_{\pi}(x_0, x_1) + r_{\pi}(x_1, x_2) + \cdots + r_{\pi}(x_{N-1}, x_N) + v(x_N) = r^{\dagger}_{\pi}((x_0, x_1)) + \cdots + r^{\dagger}_{\pi}((x_{N-2}, x_{N-1})) + v_{\pi}^{\dagger}((x_{N-1}, x_N))$, i.e., the two Markov reward processes have the same total reward distribution.
\end{proof}
\color{black}

Since we take state transitions as states, we name this algorithm \textit{state-transition transformation}, which is described in Algorithm~\ref{alg1}. 
In short, since the distribution estimation only work for the Markov reward processes with state-based reward functions, the original Markov reward process with a transition-based reward function need to be transformed. Comparing with simplifying the reward function with (1), the proposed transformation can keep the distribution intact.
Instead of applying the transformation to a Markov reward process, which derived from an MDP with a policy, we can also apply the transformation directly to the MDP, with a policy-depend initial distribution, as well as a policy-depend salvage reward, if necessary.
\color{black}

\begin{algorithm}[b]
   \caption{State-Transition Transformation}
   \label{alg1}
\begin{algorithmic}
   \STATE {\bfseries Input:} the original Markov reward process $\langle N, S, r_{\pi}, p_{\pi}, \mu, v\rangle$, which is derived from an MDP with a transition-based reward function and a stationary policy $\pi$.
   \STATE {\bfseries Output:} a Markov reward process $\langle N^{\dagger}, S^{\dagger}, r^{\dagger}_{\pi}, p_{\pi}^{\dagger}, \mu_{\pi}^{\dagger}, v^{\dagger}_{\pi}\rangle$.
   \STATE Set the horizon $N^{\dagger} = N-1$;
   \STATE Generate the state space $S^{\dagger} = S \times S$;
   \FOR{{\bfseries all} $x^{\dagger} = (x,y)$ {\bfseries where} $x,y \in S$}
   \STATE Construct the reward function $r^{\dagger}_{\pi}(x^{\dagger}) = r_{\pi}(x,y)$;
   \STATE Construct the transition kernel \\
      		$p^{\dagger}_{\pi}(x^{\dagger} \mid y^{\dagger}) = p_{\pi}(y \mid x)$ for all $y^{\dagger}=(\cdot,x) \in S^{\dagger}$;
   \STATE Set the initial state distribution \\ $\mu_{\pi}^{\dagger}(x^{\dagger}) = \mu(x)p(y \mid x)$;
   \STATE Set the salvage function $v_{\pi}^{\dagger}(x^{\dagger}) = v(y)+r_{\pi}(x,y)$;
   \ENDFOR
\end{algorithmic}
\end{algorithm}


\begin{figure}[b]
\vskip 0.2in
\begin{center}
\centerline{\includegraphics[width=\columnwidth]{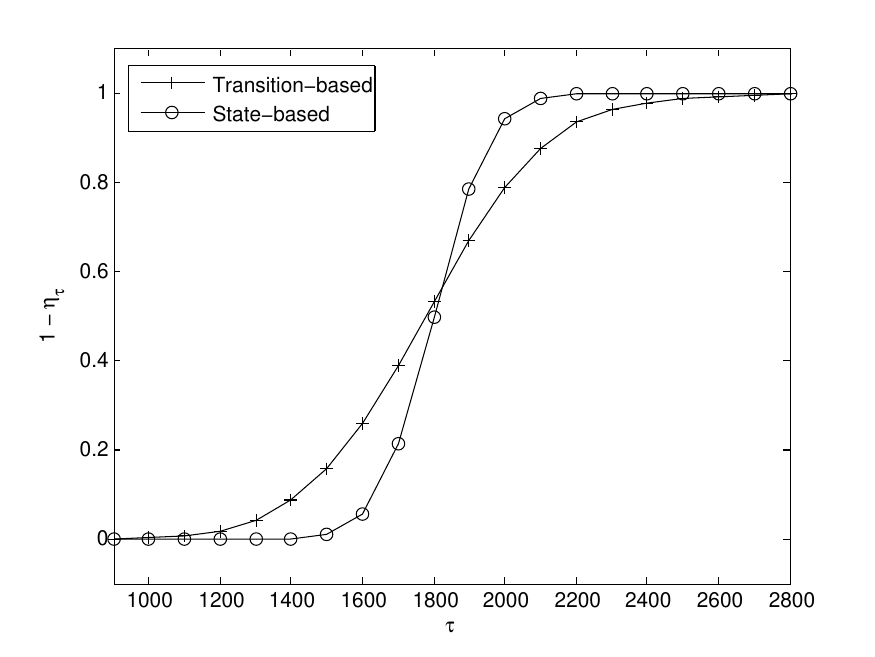}}
\caption{Estimated VaR functions for the long-horizon inventory problems with the transformed reward function and the reward function simplified by (1), respectively.}
\label{figure4}
\end{center}
\vskip -0.2in
\end{figure}

\begin{corollary}
Regardless of the salvage reward, for an MDP $\langle N, S, A, r, p, \mu\rangle$ with a transition-based reward function $r$, denoting any policy as $(\pi_t), t \in {0,\cdots,N-1}$, there exists an MDP $\langle N^{\dagger}, S^{\dagger}, A, r^{\dagger}, p^{\dagger}, \mu^{\dagger}\rangle$ with the state-based reward function $r^{\dagger}$ and the initial state distribution $\mu^{\dagger}((x,y)) = \mu(x)p(y \mid x, \pi_0(x)), x,y \in S$, such that both MDPs have the same total reward distribution when the policy for the second MDP can be denoted as $\pi^{\dagger}((\cdot, x)) = \pi(x)$.
\end{corollary}

In a similar MDP setup outlined in Section III.A, we estimate the VaR function in a long horizon. We set $N=500$ and implement the state-transition transformation to the MDP with the real reward function (transition-based) under a stationary policy. In order to reduce the computational complexity, we use $ \hat{G}_N(y) = g(y)$ instead of (2). In Fig.~\ref{figure4}, we can see that the simplification of the transition-based reward function with (1) results in a distinct VaR function with different values at most nontrivial thresholds ($\tau \in [1300, 1700] \cup [1900,2300]$ in general). For example, when $\tau$ is around 1600, the difference is around 0.2, which may have a serious impact on a risk-sensitive decision making. 


\begin{remark}[VaR in a long\label{short} horizon] 
In a long-horizon MDP, given an estimated VaR function which is strictly increasing, 
for any quantile $\alpha \in [0,1]$, there exists a unique $\tau \in \mathbb{R}$, such that $\alpha = 1-\eta_{\tau}$. In this case, every point along the estimated VaR function can be regarded as a ``jump point'' described in Remark 1. In other words, when the estimated VaR function is strictly increasing, the inversed VaR function solves the VaR Problem 1 with $\alpha \in [0,1]$.
\color{black}
\label{remark2}
\end{remark}

\color{black}

\section{RELATED WORK}
This paper studies two VaR problems defined in \cite{Filar1995b}, which studied the VaR problems on average reward by separating state space into communicating and transient classes. Bouakiz and Kebir \cite{Bouakiz1995} pointed out that the cumulative reward is needed for VaR objectives, and various properties of the optimality equations were studied in both finite and infinite-horizon MDPs. In a finite-horizon MDP, Wu and Lin \cite{Wu1999a} showed that the VaR optimal value functions are target distribution functions, and there exists a deterministic optimal policy. The structure property of optimal policy for an infinite-horizon MDP was also studied. Ohtsubo and Toyonaga \cite{Ohtsubo2002} gave two sufficient conditions for the existence of an optimal policy in infinite-horizon discounted MDPs, and another condition for the unique solution on a transient state set. For the VaR problem with a quantile $\alpha \ge 0.5$, Delage and Mannor \cite{delage2010percentile} solved it as a convex ``second order cone'' programming with reward or transition uncertainty. Different from most studies, Boda and Filar \cite{Boda2006a} and Kira et al. \cite{Kira2012a} considered the VaR objective in a multi-epoch setting, in which a risk measure is required to reach an appropriate level at not only the final epoch but also all intermediate epochs.

The VaR problem with a fixed threshold has been extensively studied. An augmented-state 0-1 MDP was proposed for finite-horizon MDPs with either integer or real-valued reward functions \cite{Xu2011}. By including the cumulative reward space into the state space, the states which satisfied the threshold can be ``tagged'' by a Boolean salvage function. The general reward discretizing error was also bounded \cite{Xu2011}. In a similar problem named MaxProb MDP, the goal states (in which the threshold is satisfied) were defined as absorbing states, and the problem was solved in a similar way \cite{weld2011heuristic}. Value iteration (VI) was proposed to solve the MaxProb MDP in \cite{stella1998optimization}, and followed by some VI variants. In the topological value iteration (TVI) algorithm, states were separated into strongly-connected groups, and efficiency was improved by solving the state groups sequentially \cite{dai2011topological}. Two methods were presented to separate the states efficiently by integrating depth-first search (TVI-DFS) and dynamic programming (TVI-DP) \cite{hou2014revisiting}. For both exact and approximated algorithms for VaR with a threshold, the state of the art can be found in \cite{steinmetz2016goal}.

Constrained probabilistic MDPs take VaR as a constraint. The mean-VaR portfolio optimization problem was solved with the Lagrange multiplier for the VaR constraint over a continuous time span \cite{Yiu2004a}. Bonami and Lejeune \cite{Bonami2009a} solved the mean-variance portfolio optimization problem, and used variants of Chebychev's inequality to derive convex approximations of the VaR function. Randour et al. \cite{Randour2015a} converted the total discounted reward to an almost-sure quantile, and proposed an algorithm based on linear programming to solve the weighted multi-constraint quantile problem. It is also pointed out that randomized policy is necessary when VaR is considered in the constraint, and an example can be found in \cite{Defourny2008a}.

\section{CONCLUSIONS AND DISCUSSIONS}
We studied short- and long-horizon MDPs with finite state spaces considering VaR objectives, and the effect of the simplification of reward function. In short-horizon MDPs, firstly we illustrated that when the real reward function is a transition-based reward function, the simplification with (1) does not affect the optimal value/policy with the expected total reward objective, but changes total reward distribution. 
Secondly we considered two VaR problems, the VaR Problem 2 can be solved with the augmented-state 0-1 MDP method in an expectation way, and we enumerated all policies to obtain the VaR function, which is for both VaR problems. When the horizon is long, we estimate $F^{\pi}_{\Phi}$ for every deterministic policy in order to obtain the VaR function. Since the estimation method is only for Markov reward processes derived from MDPs with state-based reward functions, we propose a transformation algorithm to make it feasible for the MDPs with transition-based reward functions.

Quite a few techniques operate on MDPs with transition-based reward functions, such as Q-learning and the estimation of total reward distribution. However, in many practical scenarios, the reward function is transition-based, and simplified directly with (1) will change the total reward distribution. The state-transition transformation enables the transitions to have properties of states, in order to implement the techniques and keep the distribution intact. We believe that some practical problems with respect to VaR should be revisited using our proposed transformation approach when the reward function is transition-based.

There are several differences between the augmented-state 0-1 MDP and the policy enumeration method. Firstly, since the augmented-state 0-1 MDP is based on the augmented state space, it can only deal with short-horizon MDPs. By contrast, the policy enumeration can work for long-horizon MDPs with the aid of CDF estimation. Secondly, the augmented-state 0-1 MDP can solve the VaR Problem 2 only, but the policy enumeration offers the VaR function, which solves both VaR problems. It is also worth noting that, since the augmented-state 0-1 MDP is based on backward induction (also known as dynamic programming), when the reward function is transition-based, it should not be simplified if the objective is risk-sensitive.

VaR concerns the threshold-quantile pair, and the optimality of one comes into conflict with the other as they are virtually non-increasing functions of each other \cite{Filar1995b}. One future study is to estimate the VaR function without enumerating all the policies. A special case is that there exists an optimal policy $\pi^*$, i.e.,  $F^{\pi^*}_{\Phi}(\tau) = \inf\{F^{\pi}_{\Phi}(\tau)\mid  \pi \in \Pi^N\}\text{ for all } \tau \in \mathbb{R}$. Ohtsubo and Toyonaga \cite{Ohtsubo2002} gave two sufficient conditions for the existence of this optimal policy in infinite-horizon discounted MDPs.
Another idea is to consider it as a dual-objective optimization. Zheng \cite{Zheng2009a} studied the dual-objective MDP concerning variance and CVaR, which might provide some insight. A survey of multi-objective MDP, which concerns more than risk measures, can be found in \cite{Roijers2013}. Furthermore, multiple quantile objectives can also be considered as constraints \cite{Randour2015a}. A study on multiple long-run average objectives can be found in \cite{kuvcera2014markov}.

Considering that many problems have been solved as MDPs with state-based reward functions with risk-sensitive objectives, the error bound of the reward simplification can be another necessary future study, i.e., given an MDP with a state-based reward function simplified from a transition-based one, how far the optimal value/policy can be from the real ones.

\addtolength{\textheight}{-9cm}   

\color{black}

\bibliography{NIPS2016Ref}
\bibliographystyle{IEEEtranS}

\newpage
\newpage

\end{document}